\newif\ificlrpreprint
\def\iclrpreprintcopy{\iclrpreprinttrue}
    \patchcmd{\@maketitle}{\lhead{Published as a conference paper at ICLR 2026}}%
                           {\lhead{Under review as a conference paper at ICLR 2026}}{}{}%
\def\eqref#1{equation~\ref{#1}}
\def\Eqref#1{Equation~\ref{#1}}
\def\1{\bm{1}}
\DeclareMathAlphabet{\mathsfit}{\encodingdefault}{\sfdefault}{m}{sl}
\SetMathAlphabet{\mathsfit}{bold}{\encodingdefault}{\sfdefault}{bx}{n}
\newcommand{\E}{\mathbb{E}}
\newcommand{\R}{\mathbb{R}}
\DeclareMathOperator*{\argmin}{arg\,min}
\newcommand{\cmark}{\ding{51}}%
\newcommand{\xmark}{\ding{55}}
\title{Efficient Probabilistic Tensor Networks}
\author{%
Marawan Gamal Abdel Hameed \\
Mila \& DIRO, Université de Montréal \\
\texttt{marawan.gamal@mila.quebec} \\
\And
Guillaume Rabusseau \\
Mila \& DIRO, Université de Montréal \\
}
\renewcommand{\vec}[1]{\MakeLowercase{\mathbf{#1}}}
\newcommand{\mat}[1]{\MakeUppercase{\mathbf{#1}}}
\let\tensorp\tensor
\renewcommand{\tensor}[2][]{
\tensorp{\boldsymbol{\mathcal{\MakeUppercase{#2}}}}{#1}
}
\def \E{\mathbb{E}}
\def \R{\mathbb{R}}
\def \N{\mathbb{N}}
\newtheorem{lemma}{Lemma}
\newcommand{\dtilde}[1]{\dot{\tilde{#1}}}
\begin{document}

\maketitle


\begin{abstract}
Tensor networks (TNs) enable compact representations of large tensors through shared parameters. Their use in probabilistic modeling is particularly appealing, as probabilistic tensor networks (PTNs) allow for tractable computation of marginals. However, existing approaches for learning parameters of PTNs are either computationally demanding and not fully compatible with automatic differentiation frameworks, or numerically unstable. In this work, we propose a conceptually simple approach for learning PTNs efficiently, that is numerically stable. We show our method provides significant improvements in time and space complexity, achieving 10× reduction in latency for generative modeling on the MNIST dataset. Furthermore, our approach enables learning of distributions with 10× more variables than previous approaches when applied to a variety of density estimation benchmarks. Our code is publicly available at \href{https://github.com/marawangamal/ptn}{github.com/marawangamal/ptn}.
\end{abstract}

\section{Introduction}
\label{sec:intro}
Generative modeling has seen widespread adoption in recent years, particularly in the areas of language modeling~\citep{openai2023gpt4}, image and video generation~\citep{ho2022imagenvideo}, drug discovery~\citep{segler2018drugs} and material science~\citep{menon2022generative}. These achievements have been made possible by deep neural network based architectures such as Generative Pretrained Transformers~\citep{radford2018improving}, Generative Adversarial Networks~\citep{goodfellow2014generative}, Variational Auto-encoders (VAEs)~\citep{kingma2014vae}, Normalizing Flows~\citep{rezende2015variational, papamakarios2019normalizing}
and Diffusion models~\citep{ho2020ddpm}.

While generative models used for these applications are remarkably performant in terms of sampling, they fall short in terms of inference. For instance, consider a set of random variables $Y_1, \ldots, Y_N$ and a density $p(Y_1, \ldots, Y_N)$ represented by one of the aforementioned models. Queries such as $p(Y_a | Y_c)$ cannot be performed, where $Y_a, Y_b, Y_c$ are obtained by splitting $(Y_1, \ldots Y_N)$ into three disjoint sets. This is ultimately due to the underlying probabilistic models having intractable marginals~\citep{bond2021deep}.

Tensor networks have been proposed for generative modeling since they allow for tractable  marginalization, enabling inference of sophisticated queries such as $p(Y_a | Y_b)$~\citep{han2018unsupervised, miller2021tensor}. Motivated by their success in representing many-body quantum states~\citep{schollwock2011dmrg,orus2014tensor}, matrix product states (MPS) in particular have been investigated for probabilistic modeling~\citep{han2018unsupervised, vieijra2022generative, glasser2019expressive}. In~\cite{glasser2019expressive}, MPS-based models such as Non-Negative Matrix Product States and Born Machines are used for probabilistic modeling. The parameters of these models are learned by minimizing the negative log-likelihood using stochastic gradient descent (SGD). However, this approach does not scale beyond a small number of MPS cores, thereby limiting the number of random variables that can be represented jointly. As shown in Figure~\ref{fig:crown-jewel}d, systems with 100 cores or more result in numerical overflows after only two iterations. We analyze this behavior theoretically and show that for Non Negative Matrix product States instability arises due to exponential growth in the magnitude of the expected value of the tensor entries with an increasing number of cores. Meanwhile, for Born Machines  it is due to exponential growth in the variance of tensor entries with an increasing number of cores.

Alternatively, \cite{han2018unsupervised} and \cite{cheng2019ttn} use the Density Matrix Renormalization Group (DMRG) algorithm~\citep{schollwock2011dmrg} to learn the model parameters of MPS-based models. While this approach stabilizes the computation of tensor elements due to the isometry of MPS cores and enables adaptive learning of MPS ranks, it has a number of downsides in practice. 
First, it is computationally demanding in both space and time, as each parameter update requires performing SVD (Singular Value Decomposition) on a materialized fourth-order tensor as depicted in Figures~\ref{fig:crown-jewel}a and~\ref{fig:crown-jewel}d. Second, it is not fully compatible with automatic differentiation as the DMRG algorithm does not provide a differentiable loss function that can be used for end-to-end model training as shown in Figure~\ref{fig:crown-jewel}a. Third, it is not easily parallelizable across the sequence dimension as updating more than two cores at a time would break the canonical form. While methods for parallelization of DMRG exist, they are even more memory intensive as they require the materialization of many fourth-order tensors simultaneously~\citep{Stoudenmire2013PRB} 
Lastly, we point out that DMRG implementations are non-trivial and require careful maintenance of a cache, 
which increases the barrier to entry for experimentation with PTNs.


\tcbset{
  algobox/.style={
    colback=blue!5,   
    colframe=blue!40, 
    boxrule=0.4pt,
    arc=2mm,
    left=2mm,
    right=2mm,
    top=1mm,
    bottom=1mm,
  }
}


\begin{figure}[t]
\centering
\setlength{\fboxsep}{0pt} 

\begin{minipage}[t]{0.54\textwidth}
\vspace{0pt}

\begin{tcolorbox}[algobox]
  \input{diagrams/mps-bm}
\end{tcolorbox}
\par\smallskip\centerline{ (a) DMRG~\citep{han2018unsupervised}}

\vspace{1em} 

\begin{tcolorbox}[algobox]
  \input{diagrams/mps-sigma}
\end{tcolorbox}
\par\smallskip\centerline{(c) LSF (ours)}

\end{minipage}
\hfill
%
\begin{minipage}[t]{0.45\textwidth}
\vspace{0pt}

\begin{tcolorbox}[algobox]
  \input{diagrams/mps-sgd}
\end{tcolorbox}
\par\smallskip\centerline{ (b) SGD}

\vspace{1em} 

\includegraphics[width=\linewidth]{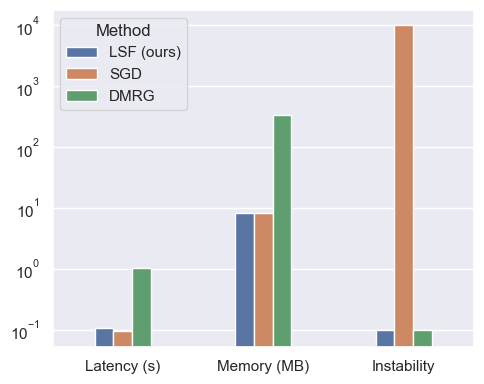}
\par\smallskip\centerline{(d) Performance}

\end{minipage}

\caption{Comparison between training methods for PTNs. 
(a) DMRG~\citep{han2018unsupervised}, (b) SGD~\citep{glasser2019expressive}, (c) our method using SGD with logarithmic scale factors (LSF) and (d) latency, memory usage and a measure of instability of the methods. DMRG has exponentially higher latency and memory usage compared with LSF and SGD. However, SGD is numerically unstable. The instability metric is equal to the remaining iterations out of $10^4$ when a numerical overflow is encountered. Even with a modest system size of 100 cores, numerical overflow occurs after just two iterations (see~\ref{app:crown-jewel-hps} for experimental details).}
\label{fig:crown-jewel}
\end{figure}

In this work, we propose a conceptually simple approach for learning PTNs that (i) is numerically stable, (ii) achieves significant improvements in both space and time complexity compared with DMRG as shown in Figure~\ref{fig:crown-jewel}d and (iii) is fully compatible with automatic differentiation, simplifying its integration into standard machine learning frameworks as shown in Figure~\ref{fig:crown-jewel}c. 

In summary, our contributions are:
\begin{itemize}
    \item Theoretically analyzing the cause of numerical instability when learning parameters of MPS-based PTNs using stochastic gradient descent and providing lower bounds that characterize the instability of Non Negative Matrix Product states and Born Machines.
    \item Developing a numerically stable method for computing the negative log-likelihood through the use logarithmic scale factors.
   \item Demonstrating that on real data our method can be used to process sequences that are 10× longer than those managed by previous SGD based approaches. Meanwhile, being 10× faster than alternative numerically stable methods relying on DMRG.
\end{itemize}

\section{Related Work}
\textbf{Tensor Networks} have been used to approximate high dimensional tensors in a variety of domains including neuroscience~\citep{williams2018unsupervised, cong2015tensor}, chemistry~\citep{murphy2013fluorescence} and hyperspectral imaging~\citep{fang2017cp}. Classical learning methods include the Higher-Order Singular Value Decomposition (HOSVD) for Tucker models~\citep{kolda2009tensor}, Alternating Least Squares (ALS) for Canonical Polyadic (CP) decompositions~\citep{carroll1970analysis}, and the Tensor-Train SVD (TT-SVD) for tensor-train (TT) formats~\citep{oseledets2011tensor}. In quantum many-body physics, the Density Matrix Renormalization Group (DMRG) provides a powerful scheme for optimizing matrix product states (MPS) through local updates~\citep{schollwock2011density}, enabling adaptive bond dimensions.

\textbf{Probabilistic modeling with Tensor Networks (TNs)} has been investigated for uni-variate conditional distributions~\citep{novikov2017exponential, stoudenmire2016supervised}, multi-variate distributions~\citep{han2018unsupervised, vieijra2022generative, cheng2019ttn, glasser2019expressive}, as well as sequence modeling tasks~\citep{miller2021tensor}. In~\cite{han2018unsupervised, cheng2019ttn}, the authors rely on sequential DMRG, although a parallelizable variant of DMRG had been proposed~\citep{Stoudenmire2013PRB}. While the parallelizable variant of DMRG has been shown to nearly reach the expected theoretical speedup, it requires substantially more memory during training as it results in the materialization of many fourth-order tensors \emph{simultaneously}. Finally, natural gradient descent has also been proposed for training MPS-based Born Machines, to avoid local minima in Quantum State Tomograpy~\citep{tang2025initialization}.

\textbf{Relationships between PTNs and alternative probabilistic modeling frameworks} such as Probabilistic Graphical Models (PGMs) and Probabilistic Circuits (PCs) have been previously investigated. In~\cite{glasser2019expressive}, mappings between hidden markov models and non negative MPS-based distributions have been provided, as well as mappings between quantum circuits and MPS-based born machines. Furthermore, in~\cite{loconte2025relationship}, the Tucker decomposition and MPS have been shown to have equivalent shallow and deep probabilistic circuit representations, respectively. Lastly,~\cite{miller2021probabilistic} provides a hybrid framework for PGMs and PTNs.

\section{Method}

We consider the task of modeling multi-variate  distributions of the form
\begin{equation}
\label{eq:joint-dist}
p(y_1, y_2, \ldots, y_N),
\end{equation}
where $y_i \in \mathcal{Y}_N$ is a discrete random variable. Since a direct representation of~\Eqref{eq:joint-dist} is generally intractable and learning in such high-dimensional spaces is hindered by the curse of dimensionality, one typically resorts to parametric approaches. Matrix Product States is a class of parametric models that can represent such distributions with the added benefit that marginals are tractable to compute.

The rest of this section is organized as follows: Section~\ref{sec:method--mps} introduces the Matrix Product State (MPS) model. Sections~\ref{sec:method--ptn-mps-bm} \& \ref{sec:method--ptn-mps-sigma} define MPS-based probabilistic tensor networks (PTNs). Section~\ref{sec:method--learning-mps-params}  outlines the trade-offs between using DMRG and SGD to train MPS-based models and provides a theoretical analysis of the stability issue encountered when using SGD to train PTNs. Section~\ref{sec:method--lsf} introduces our method using logarithmic scale factors. Section~\ref{sec:method--automatic-differentiation} compares our proposed method with the Density Matrix Renormalization Group (DMRG) for learning MPS-based models, in terms of compatibility with automatic differentiation. Lastly, Section~\ref{sec:method--sampling} demonstrates how sampling can be performed using MPS-based probabilistic models.

\subsection{Matrix Product States}
\label{sec:method--mps}
The Matrix Product State (MPS) model provides a structured representation of high-order tensors by factorizing them into a sequence of matrices. 
Thus, tensor $\tensor{T} \in \mathbb{R}^{D_1 \times D_2 \times \cdots \times D_N}$
can be approximated as a sequence of matrix multiplications
\begin{equation}
\label{eq:mps-factorization}
\tensor{T}_{y_1\cdots y_h}
\approx  \tensor{G}^{(1)}[y_1] \cdots \tensor{G}^{(N)}[y_N] 
\end{equation}
where $\tensor{G}^{(i)} \in \mathbb{R}^{R_i \times D_i \times R_{i+1}}$ are referred to as the MPS \emph{cores}, $R_i$ is referred to as the \emph{Ranks} or \emph{Bond Dimensions}, $D_i$ are referred to as the \emph{input dimensions}, $[\cdot]$ indicates slicing along the input dimension (i.e., $\tensor{g}^{(i)}[k] \in \R^{R_i \times R_{i+1}}$), and the boundaries are constrained such that $R_i = r_N = 1$, making the contraction in~\Eqref{eq:mps-factorization} scalar valued. Assuming equal dimensions $D_i=D$ for all $i$, the MPS parameterization has a space complexity of $\mathcal{O}(NDR^2)$ which is \emph{linear} in $N$, compared with $\mathcal{O}(D^N)$ in the original tensor.

\subsection{Probabilistic Modeling with $\mathrm{MPS}_\mathrm{BM}$}
\label{sec:method--ptn-mps-bm}
In order to represent a valid probability distribution using the MPS model, the parameters must be constrained such that all the tensor entries are positive and sum to one. Born Machines enforce such constraints by taking inspiration from quantum mechanics where wavefunctions induce probability distributions described by the squared norm of the wavefunction
\begin{equation}
    \label{eq:mps-born}
    p(y_1, \ldots, y_N) = \frac{|\Psi_\mathrm{BM}(\vec{y})|^2}{Z}, \quad \Psi_\mathrm{BM}(\vec{y}) = \tensor{G}^{(1)}[y_1] \cdots \tensor{G}^{(N)}[y_N]
\end{equation}
where $y_i \in \mathcal{Y}_i \subset \mathbb{N}$ and $\vec y = (y_1,\ldots, y_N)$. At first glance, it may seem that computing the normalization constant $Z$ in~\Eqref{eq:mps-born} requires a summation over an exponential number of terms, as it would require summing up all the squares of the elements in the underlying tensor
\begin{equation}
    \label{eq:mps-normalization-constant}
    Z = \sum_{y_1^\prime, \ldots y_N^\prime \in \mathcal{Y}^N} 
    \left(\tensor{g}^{(1)}[y^\prime_1] \cdots \tensor{g}^{(N)}[y^\prime_N] 
    \right)^2.
\end{equation}
Remarkably, a key property of the underlying MPS model is the ability to compute the normalization constant efficiently, in time linear in $N$. Algebraically, the computation simplifies to
\begin{align*}
\label{eq:mps-born-norm-const}
Z 
&= \sum_{r_1, r_2, y^\prime_1} 
\tensor{g}^{(1)}_{r_1, r_2}[y^\prime_1] \tensor{g}^{(1)}_{r_1, r_2}[y^\prime_1]
\;\;\cdots 
\sum_{r_N, r_{N+1}, y^\prime_N} \tensor{g}^{(N)}_{r_N, r_{N+1}}[y^\prime_N] \tensor{g}^{(N)}_{r_N, r_{N+1}}[y^\prime_N].
\end{align*}
This property is easy to see using tensor network diagrams as depicted in Figure~\ref{fig:mps-norm}. Notably, this property is not unique to MPS and other tensor network structures such as Canonical Polyadic (CP) and Tensor Tree exhibit similar simplifications as shown in~\cite{cheng2019ttn}.

\begin{figure}[tb]
  \centering
  \begin{subfigure}[t]{0.32\linewidth}
    \centering
    \begin{tikzpicture}[baseline=-0.5ex,
  box/.style={draw, fill=blue!15, minimum size=4mm, inner sep=0pt},
  dotline/.style={densely dotted, line width=0.6pt}
]
  \node[box] (t1) at (0,  0.35) {};
  \node[box] (b1) at (0, -0.35) {};
  \draw (t1) -- (b1);

  \node[box] (t2) at (0.6,  0.35) {};
  \node[box] (b2) at (0.6, -0.35) {};
  \draw (t2) -- (b2);

  \node[box] (tN) at (2,  0.35) {};
  \node[box] (bN) at (2, -0.35) {};
  \draw (tN) -- (bN);

  \draw (t1) -- (t2);
  \draw (b1) -- (b2);

  \draw[dotline] (t2.east) -- (tN.west);
  \draw[dotline] (b2.east) -- (bN.west);
\end{tikzpicture}
    \caption{}
    \label{fig:mps-norm}
  \end{subfigure}
  \begin{subfigure}[t]{0.32\linewidth}
    \centering
    \begin{tikzpicture}[baseline=-0.5ex,
  box/.style={draw, fill=blue!15, minimum size=4mm, inner sep=0pt},
  dotline/.style={densely dotted, line width=0.6pt},
  dot/.style={circle, fill=black, inner sep=1pt} 
]

  \node[dot] (r1) at (1,  1.2) {};
  \node[dot] (r2) at (1,  -1.2) {};

  \node[box] (t1) at (0,  0.35) {};
  \node[box] (b1) at (0, -0.35) {};
  \draw (t1) -- (r1);
  \draw (b1) -- (r2);
  \draw (t1) -- (b1);

  \node[box] (t2) at (0.6,  0.35) {};
  \node[box] (b2) at (0.6, -0.35) {};
  \draw (t2) -- (r1);
  \draw (b2) -- (r2);
  \draw (t2) -- (b2);

  \node[box] (tN) at (2,  0.35) {};
  \node[box] (bN) at (2, -0.35) {};
  \draw (tN) -- (r1);
  \draw (bN) -- (r2);
  \draw (tN) -- (bN);

  \node (ellipsis) at (1.3,  0.35) {$\cdots$};
  \node (ellipsis) at (1.3,  -0.35) {$\cdots$};

\end{tikzpicture}
    \caption{}
    \label{fig:cp-norm}
  \end{subfigure}
  \begin{subfigure}[t]{0.32\linewidth}
    \centering
\newcommand{\binarytree}[3]{%
  \node[box] (#3root) at (#1+1.5, #2+1.0) {};
  
  \node[box] (#3l1) at (#1+0.8, #2+0.5) {};
  \node[box] (#3r1) at (#1+2.2, #2+0.5) {};
  \draw (#3root) -- (#3l1);
  \draw (#3root) -- (#3r1);
  
  \node[box] (#3ll1) at (#1+0.2, #2+0.0) {};
  \node[box] (#3lr1) at (#1+1.3, #2+0.0) {};  
  \draw (#3l1) -- (#3ll1);
  \draw (#3l1) -- (#3lr1);
  
  \node[box] (#3rl1) at (#1+1.7, #2+0.0) {};  
  \node[box] (#3rr1) at (#1+2.8, #2+0.0) {};
  \draw (#3r1) -- (#3rl1);
  \draw (#3r1) -- (#3rr1);
  
  \coordinate (#3ll1-bot) at (#1+0.2, #2-0.05);
  \coordinate (#3lr1-bot) at (#1+1.3, #2-0.05);  
  \coordinate (#3rl1-bot) at (#1+1.7, #2-0.05);  
  \coordinate (#3rr1-bot) at (#1+2.8, #2-0.05);
  \coordinate (#3ll1-top) at (#1+0.2, #2+0.1);
  \coordinate (#3lr1-top) at (#1+1.3, #2+0.1);   
  \coordinate (#3rl1-top) at (#1+1.7, #2+0.1);   
  \coordinate (#3rr1-top) at (#1+2.8, #2+0.1);
}

\begin{tikzpicture}[baseline=-0.5ex,
  box/.style={draw, fill=blue!15, minimum size=2.5mm, inner sep=0pt},
  dotline/.style={densely dotted, line width=0.6pt}
]
  \binarytree{0}{0.3}{L}
  
  \begin{scope}[rotate=180, shift={(-3,0.2)}]
    \binarytree{0}{0}{R}
  \end{scope}
  
  \draw[shorten >=2pt, shorten <=2pt] (Lll1-bot) -- (Rrr1-bot);
  \draw[shorten >=2pt, shorten <=2pt] (Llr1-bot) -- (Rrl1-bot);
  \draw[shorten >=2pt, shorten <=2pt] (Lrl1-bot) -- (Rlr1-bot);
  \draw[shorten >=2pt, shorten <=2pt] (Lrr1-bot) -- (Rll1-bot);
\end{tikzpicture}
    \caption{}
    \label{fig:tucker-norm}
  \end{subfigure}
  \label{fig:tn-norm-const}
  \caption{Normalization constant of various PTNs. (a) MPS, (b) CP, and (c) Tensor Tree}
\end{figure}


\subsection{Probabilistic Modeling with $\mathrm{MPS}_\sigma$}
\label{sec:method--ptn-mps-sigma}
We now introduce the $\mathrm{MPS}_\sigma$ model, which enforces positivity of the underlying tensor by enforcing positivity on each of the cores independently. In other words,
\begin{equation}
    \label{eq:mps-sigma}
    p(y_1, \ldots, y_N) = \frac{
\Psi_\sigma (\vec y)
    }{Z}, \quad \Psi_\sigma(\vec y) = \sigma(\tensor{G}^{(1)}) [y_1] \cdots \sigma(\tensor{G}^{(N)})[y_N]
\end{equation}
where $\sigma: \mathbb{R} \to \mathbb{R}_{\geq 0}$ is a non-negative function, applied point-wise to tensor entries. The normalization constant in~\Eqref{eq:mps-sigma} can be computed efficiently and reduces to a sequence of matrix multiplications (see Appendix~\ref{app:mps-sigma-normalization}).

\begin{table}[b]							
\centering			
\caption{Trade-offs between different combinations of models and optimization routines (ME = Memory Efficient; PLL = Parallelizable; AD = compatible with Automatic Differentiation). Notably, LSF (ours) outperforms SGD and DMRG in terms of stability and computational intensity, respectively.
}
\label{tab:tradeoffs-simple}
\begin{tabular}{llllcccc}
\toprule
Optimization & Model & Adaptive & Fast & ME & PLL & AD & Stable \\
\midrule
DMRG \citep{han2018unsupervised} & $\mathrm{MPS}_\mathrm{BM}$ & \cmark & \xmark & \xmark & \xmark & \xmark & \cmark \\
SGD \citep{glasser2019expressive} & $\mathrm{MPS}_\mathrm{BM}$ & \xmark & \cmark & \cmark & \cmark & \cmark & \xmark \\
SGD \citep{glasser2019expressive} & $\mathrm{MPS}_\sigma$ & \xmark & \cmark & \cmark & \cmark & \cmark & \xmark \\
LSF (ours) & $\mathrm{MPS}_\mathrm{BM}$ & \xmark & \cmark & \cmark & \cmark & \cmark & \cmark \\
LSF (ours) & $\mathrm{MPS}_\sigma$ & \xmark & \cmark & \cmark & \cmark & \cmark & \cmark \\
\bottomrule
\end{tabular}					
\end{table}		

\subsection{Learning $\mathrm{MPS}_\mathrm{BM}$ and $\mathrm{MPS}_\sigma$}
\label{sec:method--learning-mps-params}

\begin{table}[tb]
\begin{minipage}[tb]{0.38\linewidth}
\centering
    \footnotesize
    \caption{
    Asymptotic time and space complexities of DMRG vs SGD.
    }
    \label{tab:complexity}
    \begin{tabular}{@{\hspace{2pt}}l@{\hspace{6pt}}l@{\hspace{2pt}}}
    \toprule
    \textbf{Complexity} & \textbf{Expression} \\
    \midrule
    \multicolumn{2}{@{\hspace{2pt}}l@{\hspace{2pt}}}{\textbf{Time}} \\
    \quad $\mathrm{MPS}_{\mathrm{BM+DMRG}}$   & $\mathcal{O}(\mathrm{NR^3}\mathrm{D} + \mathrm{NR^2D^2})$ \\
    \quad $\mathrm{MPS}_{\sigma+\mathrm{SGD}}$        & $\mathcal{O}(\mathrm{NR}^3 + \mathrm{ND})$ \\
    \midrule
    \multicolumn{2}{@{\hspace{2pt}}l@{\hspace{2pt}}}{\textbf{Space}} \\
    \quad $\mathrm{MPS}_{\mathrm{BM+DMRG}}$   & $\mathcal{O}(\mathrm{NDR^2} + \mathrm{D^2R^2})$ \\
    \quad $\mathrm{MPS}_{\sigma+\mathrm{SGD}}$        & $\mathcal{O}(\mathrm{NDR^2})$ \\
    \bottomrule
\end{tabular}
\end{minipage}\hfill
\begin{minipage}[tb]{0.55\linewidth}
\centering
    \includegraphics[width=\linewidth]{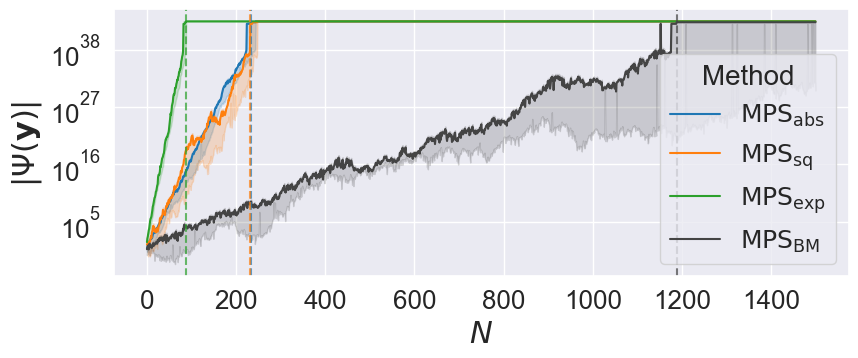}
    \captionof{figure}{
Magnitude of numerator terms in  and Equations~\ref{eq:mps-born} and \ref{eq:mps-sigma} as $N$ is increased for MPS-based models.
    }
    \label{fig:psi}
\end{minipage}
\end{table}

Previous methods have shown that DMRG can be used to learn the parameters of $\mathrm{MPS}_\mathrm{BM}$~\citep{han2018unsupervised}, and SGD can be used to learn the parameters of both $\mathrm{MPS}_\mathrm{BM}$ and $\mathrm{MPS}_\sigma$~\citep{glasser2019expressive} by minimizing the negative log likelihood. However, both approaches have shortcomings that we summarize in Table~\ref{tab:tradeoffs-simple}. While DMRG is numerically stable, it is computationally intensive (see Table~\ref{tab:complexity}) and not fully compatible with automatic differentiation, thereby making it difficult to integrate into machine learning frameworks (see  Section~\ref{sec:method--automatic-differentiation}). We also note that the combination of DMRG and $\mathrm{MPS}_\sigma$ is not well defined.
This is because naively applying a non-linearity after the decomposition would corrupt the parameter update (see Appendix~\ref{app:dmrg-mps-sigma}). This incompatibility further restricts the applicability of DMRG for training PTNs. Given the downsides of using DMRG, we revisit using vanilla SGD as in~\cite{glasser2019expressive}.


\textbf{Why can we not use vanilla SGD to train $\mathrm{MPS}_\sigma$ models?}

The $\mathrm{MPS}_\sigma$ model enforces positivity of the underlying tensor by applying a point-wise positivity function to each of the MPS cores as in~\Eqref{eq:mps-sigma}. However, this constraint causes both the numerator and denominator in \Eqref{eq:mps-sigma} to grow rapidly with the number of cores as shown in Figure~\ref{fig:psi}. 
We characterize this growth in Theorem~\ref{thm:mps-sigma-stability}, showing that the expected value of both the numerator and denominator grow \emph{exponentially} with the rank dimension (proof in Appendix~\ref{app:proofs})

\begin{restatable}{theorem}{thmMpsSigmaInstability}
\label{thm:mps-sigma-stability}
    Let the elements of the tensor $\tensor{g}^{(i)} \in \mathbb{R}^{R_i\times D\times R_{i+1}}$ be i.i.d. random variables drawn from a zero-mean gaussian distribution with unit variance, $R_1 = R_N = 1$ and $R_i = R \quad\forall i \neq 1,N$. Let $\vec y\in \mathcal{Y},\; \mathcal{Y} = \mathcal{Y}_i \times \cdots \times \mathcal{Y}_N\;$  and 
    \begin{align}
        \Psi_\sigma(\vec y) &= \sigma(\tensor{g}^{(1)}[y_1]) \cdots \sigma(\tensor{g}^{(N)}[y_N]), \quad
        Z_\sigma = \dot{\tensor{g}}^{(1)} \cdots \dot{\tensor{g}}^{(N)},
    \end{align}
where $\dot{\tensor{g}}_{ij} \triangleq \sum_k \sigma(\tensor{g}_{ikj})\;$ and $\;\sigma : \mathbb{R} \to \mathbb{R}_{\geq 0}$ is a point-wise non-negative mapping  s.t. 
\[
 \forall x \in \mathbb{R}_{>0}, \; \exists \, \epsilon_x > 0 \;\; \text{s.t.} \;\; \sigma(x) > \epsilon_x.
\]
Then, $\E[\Psi_\sigma(\vec y)] \geq \epsilon R^N\,$ and $\;\E[Z_\sigma] \geq \epsilon R^N D^N$ for some $\epsilon >0$
\end{restatable}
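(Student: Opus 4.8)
The plan is to evaluate both expectations \emph{exactly} by unfolding the MPS contractions into explicit sums over the internal bond indices, pushing the expectation inside the (finite, nonnegative) sum, and using that the Gaussian entries of distinct cores are independent. Fix any $\vec y \in \mathcal Y$. Treating the two boundary legs as trivial ($r_1 = r_{N+1} = 1$) and letting the internal indices $r_2,\dots,r_N$ each range over $\{1,\dots,R\}$, the contraction defining $\Psi_\sigma$ unfolds to
\[
\Psi_\sigma(\vec y) \;=\; \sum_{r_2,\dots,r_N}\; \prod_{i=1}^{N} \sigma\!\bigl(\tensor{g}^{(i)}_{r_i, r_{i+1}}[y_i]\bigr).
\]
Since $\sigma\ge 0$ every summand is nonnegative, so $\E$ commutes with the sum. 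For a fixed bond configuration the $N$ factors $\sigma(\tensor{g}^{(i)}_{r_i, r_{i+1}}[y_i])$ belong to the distinct cores $\tensor{g}^{(1)},\dots,\tensor{g}^{(N)}$, hence are mutually independent, and each has expectation $\mu \triangleq \E_{g\sim\mathcal N(0,1)}[\sigma(g)]$. Therefore $\E[\Psi_\sigma(\vec y)] = (\#\text{bond configurations})\cdot \mu^N = R^{N-1}\mu^N$ (up to the precise boundary convention, which only shifts the exponent of $R$ by an additive constant).

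The one step that needs a little care is $\mu>0$, since $\sigma$ is not assumed continuous. I would argue: $\sigma$ is nonnegative and, by hypothesis, $\sigma(x)>0$ for every $x>0$, so the increasing sets $A_n \triangleq \{x>0:\sigma(x)>1/n\}$ satisfy $\bigcup_n A_n = (0,\infty)$, which carries standard-Gaussian mass $\tfrac12$; hence some $A_{n_0}$ has mass $\ge \tfrac14$, giving $\mu \ge \E\!\bigl[\sigma(g)\,\mathbf 1\{g\in A_{n_0}\}\bigr] \ge \tfrac{1}{4n_0} > 0$. Substituting into the identity above yields $\E[\Psi_\sigma(\vec y)] = R^{N-1}\mu^N \ge \epsilon R^N$ with $\epsilon \triangleq \mu^N/R > 0$ (if $\mu=\infty$ the bound holds trivially for any $\epsilon$).

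For $Z_\sigma$ the argument is the identical contraction-expansion, except each matrix factor is now $\dot{\tensor{g}}^{(i)}_{r_i,r_{i+1}} = \sum_{l=1}^{D}\sigma(\tensor{g}^{(i)}_{r_i, l, r_{i+1}})$, a sum of $D$ i.i.d.\ copies of $\sigma(g)$, so by linearity $\E[\dot{\tensor{g}}^{(i)}_{r_i,r_{i+1}}] = D\mu$; independence across cores and the same bond count give $\E[Z_\sigma] = R^{N-1}(D\mu)^N = R^{N-1}D^N\mu^N \ge \epsilon R^N D^N$ with the same $\epsilon$. The only genuine obstacle is thus bookkeeping — counting the internal bond configurations of the MPS contraction correctly and verifying $\E[\sigma(g)]>0$ from the hypothesis on $\sigma$ — while the rest is just independence and linearity of expectation, with no quantitative estimate needed.
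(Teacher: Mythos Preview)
Your proof is correct and follows essentially the same route as the paper: expand the MPS contraction as a sum over bond configurations, use independence across cores to factor the expectation into $\E[\sigma(g)]^N$, and then show $\mu \triangleq \E[\sigma(g)]>0$ to conclude. The only noteworthy difference is your positivity argument for $\mu$ via the increasing sets $A_n=\{x>0:\sigma(x)>1/n\}$ and continuity of measure, whereas the paper (Lemma~\ref{lemma:sigma-epsilon}) bounds $\mu$ below by $\inf_{[a,b]}\sigma \cdot \int_a^b f_X$ over a fixed interval; your version is arguably more careful since the infimum in the paper's lemma is not obviously positive without some regularity on $\sigma$.
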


\textbf{Why can we not use vanilla SGD to train $\mathrm{MPS}_\mathrm{BM}$ models?}

In contrast to $\mathrm{MPS}_\sigma$, $\mathrm{MPS}_\mathrm{BM}$ does not enforce positivity on cores and does not suffer as drastic a growth in magnitude, since cancellation can occur between negative and positive terms in the tensor contraction as shown in Figure~\ref{fig:psi}. However, while the expected value of $\Psi_\mathrm{BM}(\vec{y})$ in Equation~\ref{eq:mps-born} is zero (with respect to cores $\tensor{g}$), its \emph{variance} grows exponentially with the rank dimension $R$, as shown in~Theorem~\ref{thm:mps-bm-stability} (proof in Appendix~\ref{app:proofs})

\begin{restatable}{theorem}{thmMpsBMInstability}
\label{thm:mps-bm-stability}
    Let the elements of the tensor $\tensor{g}^{(i)} \in \mathbb{R}^{R_i\times D\times R_{i+1}}$ be i.i.d. random variables drawn from a zero-mean gaussian distribution with unit variance, $R_1 = R_N = 1$ and $R_i = R \quad\forall i \neq 1,N$. Let $\vec y\in \mathcal{Y},\; \mathcal{Y} = \mathcal{Y}_i \times \cdots \times \mathcal{Y}_N\;$  and
    $
        \Psi_\mathrm{BM}(\vec y) = \tensor{g}^{(1)}[y_1] \cdots \tensor{g}^{(N)}[y_N].
    $
Then,  $\mathbb{E}[\Psi_\mathrm{BM} (\vec y)] = 0\;$ and $\;\mathbb{E}[\Psi_\mathrm{BM}(\vec y)^2] \geq \epsilon R^N$ for some $\epsilon >0$
\end{restatable}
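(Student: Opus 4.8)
The plan is to compute the first two moments of $\Psi_\mathrm{BM}(\vec y)$ directly, treating the contraction as a product of $N$ independent matrix slices. For the first moment, I would write $\Psi_\mathrm{BM}(\vec y) = \tensor{g}^{(1)}[y_1]\cdots\tensor{g}^{(N)}[y_N]$ as an explicit sum over the bond indices: $\Psi_\mathrm{BM}(\vec y) = \sum_{r_2,\ldots,r_N} \tensor{g}^{(1)}_{1,r_2}[y_1]\,\tensor{g}^{(2)}_{r_2,r_3}[y_2]\cdots\tensor{g}^{(N)}_{r_N,1}[y_N]$. Since the cores are independent and each entry has mean zero, every term in this sum has expectation equal to the product of the means, which is $0$; hence $\E[\Psi_\mathrm{BM}(\vec y)] = 0$. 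This part is immediate.

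For the second moment, I would square the sum and take expectations, obtaining a double sum over two independent bond-index paths $(r_2,\ldots,r_N)$ and $(r_2',\ldots,r_N')$. By independence across cores, the expectation factorizes site-by-site into terms of the form $\E\bigl[\tensor{g}^{(i)}_{r_i,r_{i+1}}[y_i]\,\tensor{g}^{(i)}_{r_i',r_{i+1}'}[y_i]\bigr]$. Because the entries within a core are i.i.d.\ standard Gaussian (in particular mean zero, variance one, and pairwise independent when the index pairs differ), this per-site expectation equals $1$ when $(r_i,r_{i+1}) = (r_i',r_{i+1}')$ and $0$ otherwise. Propagating this constraint down the chain — starting from the left boundary $r_1 = r_1' = 1$ — forces $r_i = r_i'$ for all $i$, collapsing the double sum to a single sum over one path $(r_2,\ldots,r_N)$ with each term contributing $1$. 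The number of such paths is $\prod_{i=2}^{N} R_{i}$; with $R_i = R$ for the $N-2$ interior bonds and $R_1 = R_N = 1$, this count is exactly $R^{N-2}$. Hence $\E[\Psi_\mathrm{BM}(\vec y)^2] = R^{N-2} \geq \epsilon R^N$ with $\epsilon = R^{-2}$ (or, if one prefers an absolute constant, one can absorb the polynomial factor and state the bound for $R \geq 1$).

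The main subtlety — and the step I would be most careful about — is the combinatorial bookkeeping that shows the cross terms vanish: one must verify that whenever two bond-index paths disagree at some site, the corresponding per-core expectation is zero, and that agreement at the boundaries plus the chain structure forces global agreement. This is where an i.i.d.\ assumption (rather than merely uncorrelated entries) is used, since it guarantees that distinct index pairs within a core are genuinely independent, not just uncorrelated; fourth-moment-type complications do not arise because each core appears only quadratically. Everything else is routine: expanding the contraction into index sums, exchanging expectation and finite sums, and counting paths.
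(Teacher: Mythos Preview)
Your argument is correct and in fact yields the exact identity $\E[\Psi_{\mathrm{BM}}(\vec y)^2] = R^{N-2}$ (modulo the paper's convention for which boundary ranks equal $1$), from which the stated lower bound follows with $\epsilon = R^{-2}$. The paper proceeds differently for the second moment: rather than exploiting the orthogonality $\E\bigl[\tensor{g}^{(i)}_{r_i,r_{i+1}}[y_i]\,\tensor{g}^{(i)}_{r_i',r_{i+1}'}[y_i]\bigr]=\delta_{r_i,r_i'}\delta_{r_{i+1},r_{i+1}'}$ directly, it splits the integral over parameter space by the sign of the contraction, restricts by symmetry to the half on which $\Psi_{\mathrm{BM}}>0$, lower-bounds the squared sum by the sum of squared path contributions, and then applies Lemma~\ref{lemma:sigma-epsilon} with $\sigma(x)=x^2$ to bound each factor from below. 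Your route is more elementary and sharper, producing an equality rather than an inequality; the paper's route has the structural advantage of reusing the same auxiliary lemma that drives Theorem~\ref{thm:mps-sigma-stability}, so both instability results flow from a single piece of machinery. A minor remark: the ``propagation'' step in your write-up is not actually needed, since each per-site constraint already forces both $r_i=r_i'$ and $r_{i+1}=r_{i+1}'$ simultaneously; the two bond-index paths are pinned to coincide everywhere directly, without chaining from the boundary.
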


Overall, the instability of using vanilla SGD for training PTNs severly limits it's applicability to real world datasets. For instance, in~\cite{glasser2019expressive} only datasets consisting of a maximum of 22 variables were considered.


\subsection{SGD with Logarithmic Scale Factors}
\label{sec:method--lsf}
\begin{algorithm}[tb]
\caption{LSF (Stochastic Gradient Descent with Logarithmic Scale Factors)}
\label{alg:lsf}
\begin{algorithmic}[1]
\Require Data $Y \in \N^{N_\mathrm{samples} \times D}$, parameters $g_i \in \R^{R_i \times D_i \times R_{i+1}}$
\Ensure Updated parameters $\{g_i^\star\}_{i=1}^N$
\For{$i = 1 \dots N_\mathrm{samples}$}
    \State $\tilde{p}_1 \gets \mathbf{1}_1$, $z_1 \gets \mathbf{1}_1$ \Comment{One dimensional vector}
    \State $\gamma_1^{(p)} \gets 1$, $\gamma_1^{(z)} \gets 1$
    \State $\tilde{p}_n \gets \frac{1}{\gamma_n^{(p)}}\tilde{p}_{n-1}^T \sigma(\tensor{g}^{(n-1)}[Y_{in}])$
    \State $\gamma_n^{(z)} \gets \max (\tilde{p}_{n-1}^T \sigma(\tensor{g}^{(n-1)}[Y_{in}]))$
    \State $z_n \gets \frac{1}{\gamma_n^{(z)}} z_{n-1}^T \dot{\tensor{g}}^{(n-1)}[Y_{in}]$
    \State $l \gets (\log \sum_{n} z_{N+1} - \tilde{p}_{N+1}) + \sum_j \log \gamma^{(z)}_{j} - \log \gamma^{(p)}_{j}$
    \State $\theta \gets \theta - \alpha \nabla_\theta l$
\EndFor
\State \Return $\theta$
\end{algorithmic}
\end{algorithm}

This section introduces our numerically stable method for computing the negative log-likelihood of MPS-based probabilistic tensor networks. Given a dataset of $K$ independent and identically distributed (i.i.d.)
observations
$
\mathcal{D} = \left\{ \big(y^{(k)}_1, \ldots, y^{(k)}_N\big) \right\}_{k=1}^K,
$
we learn the model parameters $\tensor{g}^{(i)}$ of an MPS parameterized distribution using maximum likelihood estimation.
Namely, we seek to minimize the empirical negative log-likelihood
\begin{equation}
\label{eq:nll-general}
\ell(\tensor{g}) 
= - \frac{1}{K} \sum_{k=1}^K 
   \log p \big(y^{(i)}_1, \ldots, y^{(i)}_N \big).
\end{equation}
where $p$ denotes the MPS-parameterized distribution given by \Eqref{eq:mps-sigma}. Computing the sequence of matrix multiplications in~\Eqref{eq:mps-sigma} leads to numerical overflow beyond small sequence lengths ($100$ or more), as shown in Figure~\ref{fig:stability-iterations}. We make the key observation that since we are ultimately concerned with computing log probabilities, we can factor out \emph{logarithms of scale factors} in order to stabilize the computation. Thus, we can compute the loss in a numerically stable fashion as follows:
\begin{equation}
\label{eq:nll-stable}
\ell(\tensor{g}) = \log \tilde{Z} - \log \tilde{p}(y_1, \ldots, y_N) + \sum_n \log \gamma^{(z)}_n - \log \gamma^{(p)}_n
\end{equation}
where~~$\tilde{\tensor{g}}^{(n)} = \frac{1}{\gamma_n^{(p)}}  \tilde{\tensor{g}}^{(1)}[y_1] \cdots \tilde{\tensor{g}}^{(n-1)}[y_{n-1}] \,\sigma(\tensor{g}^{(n)}[y_{n}])$ and $\gamma_i^{(p)}, \gamma_i^{(z)}$ are scale factors enabling the stable computation of $\tilde p$ and $\tilde Z$, respectively, as
\begin{align*}
\tilde{p}(y_1, \ldots, y_N) &= \tilde{\tensor{g}}^{(1)}[y_1] \cdots \tilde{\tensor{g}}^{(N)}[y_N], \quad
\gamma_n^{(p)} = \left\| \tilde{\tensor{g}}^{(1)}[y_1] \cdots \tilde{\tensor{g}}^{(n-1)}[y_{n-1}] \sigma(\tensor{g}^{(n)}[y_{n}]) \right\|.
\end{align*}
The same progression can be applied for the normalization constant, thus stabilizing the computation of the loss function in~\Eqref{eq:nll-stable}:
\begin{align*}
\dtilde{Z} &= \dtilde{\tensor{g}}^{(1)} \cdots \dtilde{\tensor{g}}^{(N)}, \quad 
\dtilde{\tensor{g}}^{(n)} = \frac{1}{\gamma_n^{(p)}}  \dtilde{\tensor{g}}^{(1)} \cdots \dtilde{\tensor{g}}^{(n-1)}
\dot{\tensor{g}}^{(n)} \\
\gamma_n^{(z)} &= \left\| \dtilde{\tensor{g}}^{(1)}[y_1] \cdots \dtilde{\tensor{g}}^{(n-1)}[y_{n-1}] \dot{\tensor{g}}^{(n)}[y_{n}] \right\| \quad
\dot{\tensor{g}}^{(i)} = \sum_{y_i \in \mathcal{Y}_i} \sigma(\tensor{g}^{(i)}[y_i^\prime]).
\end{align*}

The overall procedure is provided in Algorithm~\ref{alg:lsf}.

\subsection{Compatibility with Automatic Differentiation}
\label{sec:method--automatic-differentiation}

The method proposed in Section~\ref{sec:method--lsf} enables the stable computation of the negative log-likelihood using~\Eqref{eq:nll-stable}, thus end-to-end learning of PTN model parameters can be performed. In contrast, the DMRG algorithm uses the negative log likelihood to compute updates with respect to a fourth-order tensor that is subsequently decomposed using SVD. This decomposition serves two purposes: (i) it enables adaptive learning of bond dimensions and (ii) it maintains isometry of cores, which in turn stabilizes the computation of the loss. 

A single step of the DMRG algorithm used in~\cite{han2018unsupervised} is depicted in Figure~\ref{fig:dmrg}. First, the neighboring cores $\tensor{g}^{(1)}$ and $\tensor{g}^{(2)}$ are merged. Second, the loss function (negative log likelihood) is computed. Third, the gradient of the loss function is computed with respect to the fourth-order tensor (this can be done using automatic differentiation) and used to update the fourth order tensor using gradient descent. Lastly, the updated the fourth order tensor is decomposed using SVD and singular vectors are copied into the model parameters $\tensor{g}^{(1)}, \tensor{g}^{(2)}$. Crucially, the last step is not an ancestor of the loss function computation, thus model parameters cannot be updated end-to-end using automatic differentiation.


\begin{figure}[tb]
  \centering
  \begin{minipage}[t]{0.63\textwidth}
    \centering
    \scalebox{0.8}{\input{diagrams/dmrg}}
    \caption{Illustration of a single update step using the DMRG two site update algorithm used in~\cite{han2018unsupervised}. (1) Cores $\tensor{g}^{(1)}$ and $\tensor{g}^{(2)}$ are merged, (2) the loss is computed with respect to the merged fourth order tensor, (3) the gradient is computed and used to update the fourth order tensor using automatic differentiation and (4) the fourth order tensor is decomposed using SVD, then singular vectors are \emph{copied} into cores $\tensor{g}^{(1)}$ and  $\tensor{g}^{(2)}$.}
    \label{fig:dmrg}
  \end{minipage}
  \hfill
  \begin{minipage}[t]{0.34\textwidth}
    \centering
    \includegraphics[width=\linewidth]{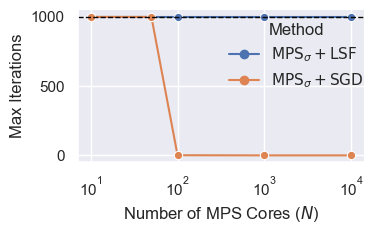}
    \caption{Maximum number of iterations reached during training using vanilla stochastic gradient descent $\mathrm{MPS}_{\sigma + \mathrm{SGD}}$ vs. stochastic gradient descent with logarithmic scale factors $\mathrm{MPS}_{\sigma + \mathrm{LSF}}$ (ours).}
    \label{fig:stability-iterations}
  \end{minipage}
\end{figure}

\subsection{Sampling from MPS-based models}
\label{sec:method--sampling}
Sampling from $\mathrm{MPS}$-based models can de done efficiently and reduces to performing a sequence of matrix multiplications. Conditional sampling can also be performed efficiently, due to the tractable computation of marginals. For instance, in order to sample in an auto-regressive fashion, we can compute the conditional distribution for the $n^\text{th}$ position given the past as follows:
\begin{align*}
p(y_n \mid y_1, \ldots, y_{n-1})
    &= \frac{p(y_1, \ldots, y_n)}{p(y_1, \ldots, y_{n-1})} = \frac{
    \tensor{g}^{(1)}[y_1] \cdots \tensor{g}^{(n)}[y_n]
    \dot{\tensor{g}}^{(n+1)}[y_{n+1}] \cdots 
       \dot{\tensor{g}}^{(N)}[y_N]
       }{
    Z
       },
\end{align*}
where $\dot{\tensor{g}}_{ij} = \sum_k \tensor{g}_{ikj}$. Notably, with MPS-based models we can sample in any order and from any marginal distribution (see Appendix~\ref{app:mps-sampling}). 

\section{Experiments}
In this section we compare our method LSF with both vanilla SGD and DMRG for training different MPS-based probabilistic tensor networks. Section~\ref{sec:experiments--stability} compares the stability of LSF vs. SGD.  Section \ref{sec:experiments--complexity} compares latency and memory requirements of LSF vs. DMRG for varying MPS model dimensions. Section~\ref{sec:experiments--ucla} compares the performance of LSF vs. SGD on various density estimation benchmarks and Section~\ref{sec:experiments--mnist} compares the performance of LSF vs. DMRG on MNIST.

\begin{figure*}[tb]
    \centering
    \begin{subfigure}[t]{0.68\textwidth}
        \centering
        \includegraphics[width=\linewidth]{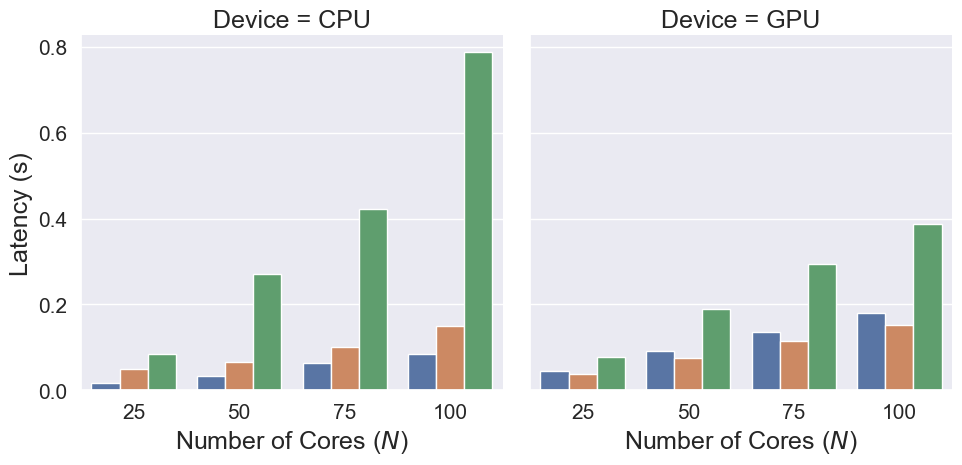}
        \caption{}
        \label{fig:latency}
    \end{subfigure}
    \begin{subfigure}[t]{0.31\textwidth}
        \centering
        \includegraphics[width=\linewidth]{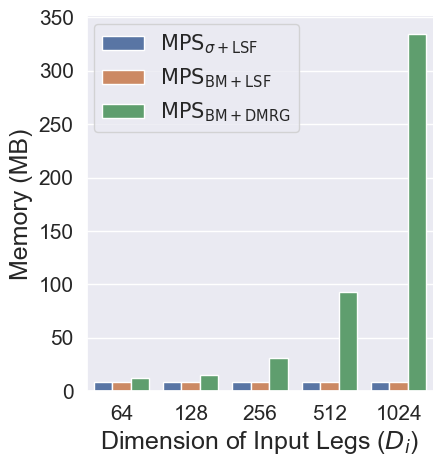}
        \caption{}
        \label{fig:peak-memory}
    \end{subfigure}
    \caption{(a) Latency of single update to all model parameters using LSF and DMRG, on both CPU and GPU for various number of cores ($N$).
    (b) Peak memory encountered during a single update to all model parameters using LSF and DMRG for various free leg dimensions.}
    \label{fig:latency-memory-combined}
\end{figure*}

\begin{minipage}[t]{0.52\textwidth}
{\centering
\captionof{table}{Average test negative log-likelihood achieved by different tensor network models trained on MNIST. Notably, $\mathrm{MPS}_{\sigma\mathrm{+LSF}}$ achieves performance close to $\mathrm{MPS}_{\mathrm{BM+DMRG}}$ while being $10\times$ faster.}
\label{tab:mnist-results}
\begin{tabular}{l c c}
\toprule
\textbf{Model} & \textbf{NLL} & \textbf{Latency (s)}  \\
\midrule
$\mathrm{PixelCNN}$ & 0.104 & -  \\
\midrule
$\mathrm{MPS}_\mathrm{BM + DMRG}$ & 0.129 & 1.20  \\
$\mathrm{MPS}_{\mathrm{exp}\mathrm{+ LSF}}$ (ours) & 0.136 & 0.11 \\
$\mathrm{MPS}_{\mathrm{abs}\mathrm{+ LSF}}$ (ours) & 0.140 & 0.12 \\
$\mathrm{MPS}_{\mathrm{sig}\mathrm{+ LSF}}$ (ours) & 0.148 & 0.98 \\
$\mathrm{MPS}_\mathrm{BM+LSF}$ (ours) & 0.168 & 0.12  \\
\bottomrule
\end{tabular}
}

\subsection{Comparing the stability of LSF vs. SGD}
\label{sec:experiments--stability}

This section analyzes the numerical stability of training MPS-based models using stochastic gradient descent with logarithmic scale factors (LSF) vs. vanilla stochastic gradient descent (SGD) as in~\citet{glasser2019expressive}. In Figure~\ref{fig:stability-iterations} we train $\mathrm{MPS}_\sigma$ for up to 1k iterations while varying the number of cores. For smaller systems (approximately $N\approx 50$), SGD successfully updates the MPS cores. However, for systems exceeding $N=100$, numerical overflow prevents more than a single iteration to be performed. In contrast, LSF enables training for the maximum number of iterations even with 10k MPS cores. 
\end{minipage}
\hfill
\begin{minipage}[t]{0.46\textwidth}
\centering
\captionof{table}{Average test negative log-likelihood for LSF compared to SGD~\citep{glasser2019expressive} and EiNet (EN)~\citep{peharz2020einet} methods. The † symbol indicates numerical overflow occurred before training completion, while ~\xmark~ indicates numerical overflow before completing a single epoch.}
\label{tab:ucla-results}
\begin{tabular}{lrccc}
\toprule
Dataset & $N$ & EN & SGD & LSF \\
\midrule
nltcs & 16 & 0.38 & 0.38 & 0.38 \\
msnbc & 17 & 0.35 & 0.36 & 0.36 \\
kdd-2k & 64 & 0.03 & $0.33^\dagger$ & 0.03 \\
plants & 69 & \textbf{0.20} & $0.37^\dagger$ & 0.24 \\
jester & 100 & 0.53 & \xmark & 0.54 \\
audio & 100 & 0.40 & \xmark & 0.42 \\
netflix & 100 & 0.57 & \xmark & 0.59 \\
accidents & 111 & 0.34 & \xmark & 0.35 \\
retail & 135 & 0.08 & \xmark & 0.08 \\
pbstar & 163 & 0.24 & \xmark & 0.23 \\
dna & 180 & 0.54 & \xmark & \textbf{0.44} \\
kosarek & 190 & 0.06 & \xmark & 0.06 \\
msweb & 294 & 0.04 & \xmark & 0.04 \\
book & 500 & 0.07 & \xmark & 0.07 \\
movie & 500 & 0.11 & \xmark & 0.12 \\
web-kb & 839 & 0.19 & \xmark & 0.20 \\
r52 & 889 & 0.10 & \xmark & 0.11 \\
20ng & 910  & 0.17 & \xmark & 0.18 \\
bbc & 1058 & 0.25 & \xmark & 0.26 \\
ad & 1556 & 0.04 & \xmark & 0.04 \\
\bottomrule
\end{tabular}
\end{minipage}

\subsection{Latency and Memory usage of LSF vs. DMRG}
\label{sec:experiments--complexity}
We analyze the latency and peak memory usage of LSF compared with DMRG in Figure~\ref{fig:latency-memory-combined} on both CPU and GPU. We set the batch size, ranks $R_i$ and free legs $D_i$ to 32, 8 and 2 respectively. Meanwhile, we vary the number of cores $N$. As shown in Figure~\ref{fig:latency}, $\mathrm{MPS}_{\sigma\mathrm{+LSF}}$ and $\mathrm{MPS}_{\mathrm{BM+LSF}}$ achieve drastic speedups over $\mathrm{MPS}_\mathrm{BM+DMRG}$~\citep{han2018unsupervised} as the number of cores $N$ is increased. For instance, with $N=100$, $\mathrm{MPS}_\mathrm{BM+DMRG}$ requires ~\textbf{0.8} seconds to perform one update to all model parameters; meanwhile, $\mathrm{MPS}_\mathrm{\sigma +LSF}$ requires \textbf{0.09} seconds, leading to approximately one order of magnitude speedup. 

We compare the peak memory usage of both methods in Figure~\ref{fig:peak-memory} at various input dimensions. As illustrated in Figure~\ref{fig:crown-jewel}, $\mathrm{MPS}_{\mathrm{BM+DMRG}}$ requires the materialization of fourth-order tensors during training. We show in Figure~\ref{fig:peak-memory} that this quickly leads to extreme  memory consumption. For instance, at $D_i=1024$ $\mathrm{MPS}_\mathrm{BM+DMRG}$ requires \textbf{334 MB} compared with only \textbf{8 MB} for $\mathrm{MPS}_{\sigma+\mathrm{LSF}}$.

\subsection{Performance of LSF vs. SGD on Density Estimation Benchmarks}
\label{sec:experiments--ucla}
This section compares the generalization performance of MPS-based models trained using SGD~\citep{glasser2019expressive} vs. LSF on 20 density estimation benchmarks~\citep{lowd2010learning, vanhaaren2012markov}. We also compare against EiNet, a state-of-the-art probabilistic circuit with tractable marginals~\citep{peharz2020einet}. Specifically, we train $\mathrm{MPS}_{\sigma+\mathrm{LSF}}$ for 50 epochs with batch size 32, bond dimension of 32, learning rate of 5e-3 and select the exponential function for positivity. Table~\ref{tab:ucla-results} reports the best test set performance. The~~\xmark~~symbol indicates numerical overflow before completing the first epoch. Notably, our method achieves comparable performance with EiNet, meanwhile the approach in~\cite{glasser2019expressive} results in numerical overflow on most datasets. Specifically, \textbf{SGD fails entirely on all datasets with 100 random variables or more} and partially on datasets consisting of $\sim60$ random variables.


\subsection{Comparing MNIST Generalization Performance of LSF vs. DMRG}
\label{sec:experiments--mnist}

This section compares the generalization performance of MPS-based models trained with DMRG vs. LSF on the task of learning to generate MNIST digits. We use 60,000 training samples and 10,000 test samples. Each image is flattened and binarized to produce a 784-dimensional binary vector. We then train MPS models using LSF, setting $N=784,\, D_i=2, \text{ and } R_i=32$. Table~\ref{tab:mnist-results} demonstrates that $\mathrm{MPS}_{\sigma+\mathrm{LSF}}$ achieves performance comparable to $\mathrm{MPS}_\mathrm{BM+DMRG}$ while providing approximately \textbf{$\mathbf{10\times}$ speedup}. The memory advantages of LSF over DMRG are negligible in this experiment as DMRG's memory usage scales quadratically with input dimensions $D_i$, which only equals two in this experiment. We also benchmark against PixelCNN, which achieves state-of-the-art performance on this task. Although PixelCNN outperforms both MPS approaches, it lacks tractable marginals, thus cannot be used for inference of complex queries.

Since LSF enables training a wider range of MPS-based models than DMRG, we experiment with various positivity enforcing functions. We find that the $\mathrm{MPS}_\sigma$ models generally outperforms $\mathrm{MPS}_\mathrm{BM}$ when using LSF, and that among $\mathrm{MPS}_\sigma$ models, using the exponential function often lead to the best performance.

\section{Conclusion}

Probabilistic Tensor Networks (PTNs) enable tractable inference over high-dimensional distributions, but face significant training challenges. Previous work has been limited to small-scale experiments ($<50$ variables) or relied on the computationally intensive DMRG algorithm for stable learning of a particular subset of PTNs. Beyond its computational cost, the reliance on DMRG presents a significant barrier to experimentation with PTNs, as DMRG implementations require non-trivial cache management for efficient batch processing, and do not leverage automatic differentiation for end-to-end model training~\citep{UnsupGenModbyMPS}. 

In this work, we addressed these limitations by introducing a stable method for the computation of the negative log-likelihood based on logarithmic scale factors. This approach enables larger scale training of PTNs, making them more practical for real-world applications. These advances also enable experimentation with PTNs using standard deep learning pipelines, while also opening exploration of the broad $\mathrm{MPS}_\sigma$ class of PTNs.


\bibliography{iclr2026_conference}
\bibliographystyle{iclr2026_conference}

\newpage
\appendix
\section{Appendix}
\label{app:crown-jewel-hps}

\subsection{Experimental details for Figure~\ref{fig:crown-jewel}d}
In this experiment we use the hyper-parameters listed in Table~\ref{tab:hyperparams}. The instability metric is computed using the following equation
$$
\mathrm{Instability} = \mathrm{Max~Iterations~Reached} - 10000 + 0.1,
$$
where the maximum number of iterations possible is 10k.

\begin{table}[h]
\centering
\caption{Hyper-parameters for experiments shown in~Figure~\ref{fig:crown-jewel}d.}
\label{tab:hyperparams}
\begin{tabular}{lccc}
\toprule 
\textbf{HP} & \textbf{Latency} & \textbf{Instability} & \textbf{Memory}  \\
\midrule
Batch Size       & 32   & 32    & 32   \\
Rank             & 2    & 2     & 2    \\
Input leg        & 2    & 2     & 1024 \\
Number of cores  & 100  & 100   & 5   \\
\bottomrule
\end{tabular}
\end{table}

\subsection{Computing the normalization constant of $\mathrm{MPS}_\sigma$}
\label{app:mps-sigma-normalization}
The normalization constant $Z$ of the $\mathrm{MPS}_\sigma$ class can be computed in time linear in $N$, following a similar algebraic simplification as shown for the Born Machine in~\Eqref{eq:mps-normalization-constant},
\begin{align}
    \label{eq:mps-sigma-normalization-constant}
Z 
&= \sum_{y_1^\prime, \ldots y_N^\prime \in \mathcal{Y}^N} 
    \sigma\left(\tensor{g}^{(1)}[y^\prime_1]\right) \cdots \sigma\left(\tensor{g}^{(N)}[y^\prime_N]\right) \\
&= \sum_{r_1, r_2, y^\prime_1} 
\sigma\left(\tensor{g}^{(1)}_{r_1, r_2}[y^\prime_1] \right)
\;\;\cdots 
\sum_{r_N, r_{N+1}, y^\prime_N} 
\sigma\left(\tensor{g}^{(N)}_{r_N, r_{N+1}}[y^\prime_N]\right).
\end{align}

\subsection{Using DMRG with $\mathrm{MPS}_\sigma$}
\label{app:dmrg-mps-sigma}

The combination of DMRG and $\mathrm{MPS}_\sigma$ is not well defined. As shown in Figure~\ref{fig:dmrg-repr}, the last step of the DMRG algorithm involves performing SVD in order to obtain an optimal low-rank decomposition of the matricization of tensor $\tilde{\tensor{g}}$, thereby solving
\begin{equation}
    \argmin_{\mat{g}^{(1)}, \mat{g}^{(2)}}  \left\| \tilde{\mat{g}} - \mat{g}^{(1)}\mat{g}^{(2)} \right\|,
\end{equation}
where $\mat{G}^{(1)} \in \R^{m\times r}$, $\mat{G}^{(2)} \in \R^{r\times n}$. However, in order for DMRG to apply to the $\mathrm{MPS}_\sigma$ class a different optimization problem must be solved, namely
\begin{equation}
    \argmin_{\mat{g}^{(1)}, \mat{g}^{(2)}}  \left\| \tilde{\mat{g}} - \sigma\left(\mat{g}^{(1)}\right)\sigma\left(\mat{g}^{(2)}\right) \right\|.
\end{equation}

\subsection{Backward sampling from an MPS-based distribution}
\label{app:mps-sampling}
As MPS-based models have tractable marginals, inference of more sophisticated queries is possible. For example, backward auto-regressive sampling can be performed using 
\begin{align*}
p(y_n | y_{n+1}, \ldots y_N)
    &= \frac{p(y_n, \ldots, y_N)}{p(y_{n+1}, \ldots, y_{N})} \\
    &= \frac{
    \dot{\tensor{g}}^{(1)}[y_1] \cdots \dot{\tensor{g}}^{(n-1)}[y_{n-1}]
    \tensor{g}^{(n)}[y_n]
    \tensor{g}^{(n+1)}[y_{n+1}] \cdots 
       \tensor{g}^{(N)}[y_N]
       }{
    Z
       }.
\end{align*}
In \cite{han2018unsupervised}, the authors provide examples of image inpainting by conditioning on particular subsets of inputs.

\subsection{Proofs}
\label{app:proofs}

\begin{lemma} 
\label{lemma:sigma-epsilon}
Let $X$ denote a normally distributed random variable,
$\sigma : \mathbb{R} \to \mathbb{R}_{\geq 0}$ denote a non-negative mapping  s.t. 
\[
 \forall x \in \mathbb{R}_{>0}, \; \exists \, \epsilon_x > 0 \;\; \text{s.t.} \;\; \sigma(x) > \epsilon_x.
\]
Then, $\exists \,\epsilon > 0 \;\; \text{s.t.} \;\; \mathbb{E}_X\left[ \sigma(x) \right]  > \epsilon$
\end{lemma}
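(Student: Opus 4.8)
The plan is to show that the expectation of $\sigma(X)$ for a normal random variable $X$ is bounded below by a positive constant, using only the pointwise lower-bound property of $\sigma$ together with the fact that a Gaussian assigns positive probability to every interval of positive length.

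\textbf{Approach.} First I would fix any bounded interval of positive reals, say $[a,b]$ with $0 < a < b$, on which I can extract a uniform lower bound for $\sigma$. The hypothesis only gives, for each individual $x > 0$, \emph{some} $\epsilon_x > 0$ with $\sigma(x) > \epsilon_x$; it does \emph{not} immediately give a single $\epsilon$ working for all $x$ in $[a,b]$, since $\sigma$ is not assumed continuous. The cleanest fix is to pick a single point $x_0 \in (a,b)$, set $\epsilon_0 = \epsilon_{x_0} > 0$, and then use that $\sigma \ge 0$ everywhere to write, for the Gaussian density $\phi$ of $X$,
\[
\E_X[\sigma(X)] = \int_{\R} \sigma(x)\,\phi(x)\,dx \;\ge\; \int_{\{x_0\}} \sigma(x)\,\phi(x)\,dx,
\]
which is $0$ because a point has Lebesgue measure zero — so that naive route fails. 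Instead I would argue by contradiction or by a measure-theoretic averaging argument: consider the set $A_n = \{x \in [a,b] : \sigma(x) > 1/n\}$. Since every $x \in [a,b]$ has $\sigma(x) > \epsilon_x > 0$, we have $[a,b] = \bigcup_{n\ge 1} A_n$, so by countable subadditivity (or continuity of measure from below) there exists $n^\star$ with the Lebesgue measure $|A_{n^\star}| > 0$; equivalently $\Pr[X \in A_{n^\star}] > 0$ since the Gaussian measure is equivalent to Lebesgue measure. Then
\[
\E_X[\sigma(X)] \;\ge\; \E_X\big[\sigma(X)\,\mathbf{1}[X \in A_{n^\star}]\big] \;\ge\; \tfrac{1}{n^\star}\,\Pr[X \in A_{n^\star}] \;=:\; \epsilon > 0,
\]
using $\sigma \ge 0$ to drop the complementary event and $\sigma > 1/n^\star$ on $A_{n^\star}$.

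\textbf{Key steps in order.} (1) Write $\E_X[\sigma(X)]$ as an integral against the Gaussian density and note $\sigma \ge 0$. (2) Fix a window $[a,b] \subset \R_{>0}$ and decompose it as the increasing union of the superlevel sets $A_n = \{x \in [a,b]: \sigma(x) > 1/n\}$, which cover $[a,b]$ by the hypothesis on $\sigma$. (3) Invoke continuity of measure from below (or countable subadditivity on $\Pr[X \in [a,b]] > 0$) to obtain $n^\star$ with $\Pr[X \in A_{n^\star}] > 0$. (4) Lower-bound the expectation by restricting to $A_{n^\star}$ and using the uniform bound $1/n^\star$ there, yielding the claimed $\epsilon$.

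\textbf{Main obstacle.} The only subtlety is the lack of any regularity assumption on $\sigma$: one cannot simply pull out a uniform constant from the pointwise bounds, and one must avoid the trap of integrating over a null set. The superlevel-set / continuity-of-measure argument is exactly what circumvents this, and it is robust — it needs nothing about $X$ beyond the fact that its distribution gives positive mass to every sub-interval of $(0,\infty)$, which holds for any non-degenerate Gaussian. I expect the author's proof to be a short version of this, possibly phrased directly in terms of $\Pr[X > 0] > 0$ and a single superlevel set rather than an explicit interval $[a,b]$; either phrasing works, and this lemma is then plugged into the proofs of Theorems~\ref{thm:mps-sigma-stability} to propagate the per-core bound $\E[\sigma(\tensor{g})] > \epsilon$ through the product.
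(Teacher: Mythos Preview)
Your proof is correct and in fact more careful than the paper's. The paper argues exactly via the shortcut you rejected: it fixes $0<a<b$, bounds
\[
\E_X[\sigma(X)] \;\ge\; \int_a^b \sigma(x) f_X(x)\,dx \;\ge\; \Bigl(\inf_{x\in[a,b]}\sigma(x)\Bigr)\int_a^b f_X(x)\,dx,
\]
and then asserts that this product is strictly positive ``because both $\sigma(x)>0$ and $f_X(x)>0$ for $x\in[a,b]$.'' As you observed, pointwise positivity of $\sigma$ on $[a,b]$ does \emph{not} force $\inf_{[a,b]}\sigma>0$ without a regularity assumption such as lower semicontinuity; so at the stated level of generality the paper's step has the very gap you identified. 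Your superlevel-set argument ($A_n=\{x\in[a,b]:\sigma(x)>1/n\}$, continuity of measure from below, then restrict the integral to some $A_{n^\star}$ of positive measure) genuinely closes it, at the mild cost of needing $\sigma$ measurable---which is already implicit in writing $\E_X[\sigma(X)]$. In the paper's actual applications $\sigma$ is always continuous (exp, abs, sigmoid), so the infimum route happens to work there; your version is what the lemma needs to be valid as stated.
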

\begin{proof}
Let $a,b \in\mathbb{R}$ and $0<a<b$. Then,
\begin{align}
     \mathbb{E}_X\left[ \sigma(x) \right] &= \int \sigma(x) f_X(x) \\
     \label{eq:lemma-proof-1}
     &\geq  \int_a^b \sigma(x) f(x) \\ 
     &\geq   \inf \{\sigma(x) \,|\, a\leq x\leq b\} \int_a^b f_X(x) \\
     \label{eq:lemma-proof-3}
     &= \epsilon^\prime \\
     &\geq \frac{\epsilon^\prime}{2} \\
     &= \epsilon
\end{align}
where \Eqref{eq:lemma-proof-1} holds because both $\sigma$ and $f_X$ are non-negative, \Eqref{eq:lemma-proof-3} holds because both $\sigma(x) > 0$ and $f_X (x) > 0$ for $x\in [a, b]$. Lastly, we have that $\epsilon > 0$ since $\epsilon^\prime > 0$.
\end{proof}

\begin{figure}[t]
  \centering
    \centering
    \input{diagrams/dmrg}
    \caption{(Reproduction of Figure~\ref{fig:dmrg}) Illustration of a single update step using the DMRG two site update algorithm used in~\cite{han2018unsupervised}. (1) cores $\tensor{g}^{(1)}$ and $\tensor{g}^{(2)}$ are merged (2) the loss is computed with respect to the merged fourth order tensor (3) the gradient is computed and used to update the fourth order tensor using automatic differentiation (4) the fourth order tensor is decomposed using SVD, then singular vectors are \emph{copied} into cores $\tensor{g}^{(1)}$ and  $\tensor{g}^{(2)}$.}
    \label{fig:dmrg-repr}
\end{figure}

\thmMpsSigmaInstability*

\begin{proof}
Let $\mathcal{R} = \{n \,|\, n\in\N, n\leq R\}$ denote a set of integers, then the expected value of $\Psi_\sigma(\vec y)$ is bounded below, since
    \begin{align}
        \E[\Psi_\sigma (\vec y)] 
        &= \E
        \left[ \sum_{\vec r \in \mathcal{R}^N} 
        \sigma \left( \tensor{g}_{r_1 r_2}^{(1)}[y_1] \right) 
        \cdots
        \sigma \left( \tensor{g}_{r_N r_{N+1}}^{(N)}[y_N] \right )
        \right] \\
        &=  \sum_{\vec r \in \mathcal{R}^N} 
        \E\Big[
        \sigma \left( \tensor{g}_{r_1 r_2}^{(1)}[y_1] \right)
        \Big]
        \cdots
        \E\Big[
        \sigma \left( \tensor{g}_{r_N r_{N+1}}^{(N)}[y_N] \right )
        \Big] \\
        \label{eq:thm3-proof-step-2}
        &> \sum_{\vec r \in \mathcal{R}^N}  \epsilon^{(r_1)} \cdots \epsilon^{(r_N)}  \\
        &\geq \sum_{\vec r \in \mathcal{R}^N}  \tilde\epsilon^N \\
        &= \epsilon R^N,
    \end{align}
where $\tilde\epsilon \triangleq \min \epsilon^{(r_1)} \cdots \epsilon^{(r_N)}$ and \Eqref{eq:thm3-proof-step-2} follows from Lemma~\ref{lemma:sigma-epsilon}a . Similarly, the normalization constant is bounded below,
    \begin{align}
        \E[Z_\sigma] 
        &= \E
        \left[ \sum_{\vec r \in \mathcal{R}^N} \sum_{\vec y \in \mathcal{Y}}
        \sigma \left( \tensor{g}_{r_1 r_2}^{(1)}[y_1] \right) 
        \cdots
        \sigma \left( \tensor{g}_{r_N r_{N+1}}^{(N)}[y_N] \right )
        \right] \\
        &=  \sum_{\vec r \in \mathcal{R}^N} \sum_{\vec y \in \mathcal{Y}}
        \E\Big[
        \sigma \left( \tensor{g}_{r_1 r_2}^{(1)}[y_1] \right)
        \Big]
        \cdots
        \E\Big[
        \sigma \left( \tensor{g}_{r_N r_{N+1}}^{(N)}[y_N] \right )
        \Big] \\
        &> \sum_{\vec r \in \mathcal{R}^N} \sum_{\vec y \in \mathcal{Y}}  \epsilon^{(1)} \cdots \epsilon^{(N)} \\
        &\geq \sum_{\vec r \in \mathcal{R}^N} \sum_{\vec y \in \mathcal{Y}} \tilde\epsilon^N \\
        &= \epsilon R^N D^N.
    \end{align}
\end{proof}

\thmMpsBMInstability*

\begin{proof}
We have that the expectation of $\Psi_\mathrm{BM} (\vec y)$ is zero since,
\begin{align}
\E[\Psi_\mathrm{BM} (\vec y)] 
        &= \E
        \left[ \sum_{\vec r \in \mathcal{R}^N} 
        \tensor{g}_{r_1 r_2}^{(1)}[y_1] 
        \cdots
        \tensor{g}_{r_N r_{N+1}}^{(N)}[y_N]
        \right] \\
        &=  \sum_{\vec r \in \mathcal{R}^N} 
        \E\Big[
        \tensor{g}_{r_1 r_2}^{(1)}[y_1]
        \Big]
        \cdots
        \E\Big[
        \sigma \left( \tensor{g}_{r_N r_{N+1}}^{(N)}[y_N] \right )
        \Big] \\
        &=  0.
\end{align}
Therefore, its variance is given by
\begin{align}
\E[\Psi_\mathrm{BM}(\vec y )^2] 
&=
\int_{\tensor{G}} 
\left(
\sum_{\vec r \in \mathcal{R}} 
\tensor{G}_{r_1 r_2}^{(1)}[y_1] \cdots  
\tensor{G}_{r_N r_{N+1}}^{(N)}[y_1]
\right)^2
\, f_\mathcal{G} (\tensor{G}) \;d\tensor{G} \\
\label{eq:thm4-proof-step-1}
&= \frac{C}{2} + 
\int_{\tensor{G} \in\mathcal{S}} 
\left(
\sum_{\vec r \in \mathcal{R}} 
\tensor{G}_{r_1 r_2}^{(1)}[y_1] \cdots  
\tensor{G}_{r_N r_{N+1}}^{(N)}[y_1]
\right)^2
\, f_\mathcal{G} (\tensor{G}) \;d\tensor{G} \\
\label{eq:thm4-proof-step-2} 
&\geq
\int_{\tensor{G}\in\mathcal{S}} \sum_{\vec r \in \mathcal{R}} \tensor{G}_{r_1 r_2}^{(1)}[y_1]^2  \cdots  \tensor{G}_{r_1 r_2}^{(N)}[y_1]^2 f_\mathcal{G} (\tensor{G}) f_\mathcal{G} (\tensor{G}) \;d\tensor{G} \\
&=
\int_{\tensor{G}\in\mathcal{S}} \sum_{\vec r \in \mathcal{R}} \sigma\left(\tensor{G}_{r_1 r2}^{(1)}[y_1]\right)  \cdots  \sigma\left(\tensor{G}_{r_1 r_2}^{(N)}[y_1]\right) f_\mathcal{G} (\tensor{G}) \;d\tensor{G} \\
&= 
\int_{\tensor{G}\in\mathcal{S}} \sum_{\vec r \in \mathcal{R}} \epsilon^{(r_1, r_2)} \cdots \epsilon^{(r_N, r_{N+1})} f_\mathcal{G} (\tensor{G}) \;d\tensor{G} \\
&= 
\frac{1}{2} \sum_{\vec r \in \mathcal{R}} \tilde{\epsilon} \\
&= 
\frac{1}{2}
\tilde{\epsilon}R^H \\
&= \epsilon R^H,
\end{align}
where $\mathcal S$ represents the infinite set consisting of all parameters $\tensor G\triangleq \{\tensor{G}^{(N)} \ldots \tensor G^{(N)}\}$ that result in a positive contraction at test point $\vec y$. Thus, \eqref{eq:thm4-proof-step-1} follows by symmetry of the distribution represented by a product of $N$ zero-mean independent gaussian random variables. 
\end{proof}

\end{document}